\newcommand\blfootnote[1]{%
  \begingroup
  \renewcommand\thefootnote{}\footnote{#1}%
  \addtocounter{footnote}{-1}%
  \endgroup
}
\definecolor{mygreen}{HTML}{34A853}
\definecolor{myblue}{HTML}{4285F4}
\definecolor{myred}{HTML}{EA4335}
\definecolor{myyellow}{HTML}{FBBC05}
\definecolor{mylightgreen}{HTML}{a0d472}
\definecolor{mylightblue}{HTML}{7BA9F4}
\newtcolorbox{greenbox}[1]{
  colframe=mylightgreen,
  colback=mylightgreen!20!white,
  base={#1},
  breakable
}
\newtcolorbox{bluebox}[1]{
  colframe=mylightblue,
  colback=mylightblue!15!white,
  base={#1},
  breakable
}
\newtcolorbox{yellowbox}[1]{
  colframe=myyellow!50!white,
  colback=myyellow!10!white,
  base={#1},
  breakable
}
\newtcolorbox{redbox}[1]{
  colframe=myred!50!white,
  colback=myred!10!white,
  base={#1},
  breakable
}
\theoremstyle{plain}
\newtheorem{corollary}{Corollary}
\theoremstyle{definition}
\newtheorem{example}{Example}
\theoremstyle{remark}
\def\eqref#1{equation~\ref{#1}}
\def\1{\bm{1}}
\DeclareMathAlphabet{\mathsfit}{\encodingdefault}{\sfdefault}{m}{sl}
\SetMathAlphabet{\mathsfit}{bold}{\encodingdefault}{\sfdefault}{bx}{n}
\def\gC{{\mathcal{C}}}
\def\gD{{\mathcal{D}}}
\def\gL{{\mathcal{L}}}
\def\gX{{\mathcal{X}}}
\def\gY{{\mathcal{Y}}}
\def\gZ{{\mathcal{Z}}}
\def\sR{{\mathbb{R}}}
\newcommand{\E}{\mathbb{E}}
\newcommand{\indep}{\mathrel{\perp\!\!\!\perp}}
\icmltitlerunning{Preference Learning for AI Alignment: a Causal Perspective}
\begin{document}

\twocolumn[
\icmltitle{Preference Learning for AI Alignment: a Causal Perspective}




\icmlsetsymbol{equal}{}

\begin{icmlauthorlist}
\icmlauthor{Katarzyna Kobalczyk}{damtp}
\icmlauthor{Mihaela van der Schaar}{damtp}
\end{icmlauthorlist}

\icmlaffiliation{damtp}{Department of Applied Mathematics and Theoretical Physics, University of Cambridge, United Kingdom}

\icmlcorrespondingauthor{Katarzyna Kobalczyk}{knk25@cam.ac.uk}

\icmlkeywords{Machine Learning, ICML}

\vskip 0.3in
]



\printAffiliationsAndNotice{}  

\begin{abstract}
Reward modelling from preference data is a crucial step in aligning large language models (LLMs) with human values, requiring robust generalisation to novel prompt-response pairs. In this work, we propose to frame this problem in a causal paradigm, providing the rich toolbox of causality to identify the persistent challenges, such as causal misidentification, preference heterogeneity, and confounding due to user-specific factors. Inheriting from the literature of causal inference, we identify key assumptions necessary for reliable generalisation and contrast them with common data collection practices. We illustrate failure modes of naive reward models and demonstrate how causally-inspired approaches can improve model robustness. Finally, we outline desiderata for future research and practices, advocating targeted interventions to address inherent limitations of observational data.
\end{abstract}

\vspace{-2em}
\section{Introduction}

The remarkable success of LLMs lies partially in their ability to align with human values, producing responses that are helpful, harmless, and honest. A central method for achieving this alignment is reinforcement learning from human feedback (RLHF), with the LLM's behaviour shaped by reward models derived from datasets of human preferences \citep{ouyang_training_2022, bai_training_2022}. The reward modelling, aka preference learning stage seeks to address a seemingly straightforward question: Given two responses to the same prompt, which one aligns better with human objectives? However, the challenges of this stage are often underestimated, assuming that simple regression-based models fitted to observational datasets can generalise effectively to unseen texts. Meanwhile, evidence suggests this optimism may be misplaced \citep{tien_causal_2023, skalse_defining_2022}. Sole reliance on statistical associations observed in the training data is prone to learning rewards that pick up on spurious correlations rather than the true factors influencing user preferences \citep{singhal_long_2024, chen_odin_2024}. Latent features of texts often exhibit strong correlations, making it difficult to generalise to examples where such correlations no longer hold. Moreover, pairwise preference datasets are often collected opportunistically, with LLM users both evaluating model responses and generating the prompts eliciting them. As a result, recorded preferences are shaped by the interplay of the LLM’s sampling distribution, latent features of the generated responses, and user-specific contexts that vary across the population. These factors raise fundamental concerns about the robustness and reliability of current approaches.

In this work, we argue that building robust reward models requires addressing \textit{what if} type of questions: What would happen if we intervened on specific response characteristics, such as conciseness or creativity? How might preferences change if a different objective were pursued? Current methods are poorly equipped to answer such questions, motivating the need to reframe preference learning through a \textit{causal} perspective. A causal framework enables the disentangling of the effects of different causes on outcomes, facilitating robust predictions under interventions and distribution shifts. Adopting this lens not only offers new insights but also raises pivotal questions: What assumptions are required to generalise across diverse prompts and user groups? How do these assumptions influence data collection practices? How can we address violations of causal assumptions, such as confounding due to user-specific objectives?

\textbf{Contributions.} We introduce a causal framework of preference learning for AI alignment, defining the key challenges and identifying critical assumptions for generalising reward models to unseen texts and contexts. Through examples and real-world experiments, we highlight failure modes of naïve models when causal assumptions are violated, including confounding due to user-specific objectives--an issue we are first to identify and address explicitly. We demonstrate how causal representation learning approaches can improve model robustness and propose desiderata for future preference data collection, advocating for targeted interventions to mitigate the inherent limitations of observational data.

\section{Background}

\textbf{Notation.} Throughout this paper, we refer to a random variable with a capital letter (e.g., $X$) and the value it takes as a lowercase letter (e.g., $X=x$). We let $\Sigma^*$ denote the space of natural language. We use coloured boxes to highlight: insights (\textcolor{myblue}{blue}), key assumptions (\textcolor{myyellow}{yellow}), theoretical implications (\textcolor{myred}{red}), and empirical case studies (\textcolor{mygreen}{green}).

\textbf{Setup.} We consider the standard setup of preference learning for LLM alignment in which we have access to a dataset $\gD$ consisting of i.i.d. realisations $(x, y, y', \ell)$ of random variables $(X, Y, Y', L)$, where $X \in \gX \subset \Sigma^*$ is the prompt, $Y, Y' \in \gY \subset \Sigma^*$, two candidate responses, and $L \in \{0, 1\}$ is a binary preference label with $L= 1$ indicating that $(X, Y)$ is preferred over $(X, Y')$, denoted by $(X, Y) \succ (X, Y')$, and $L = 0$ the opposite. Here, $\gX \times \gY$ denotes the space of all plausible prompt-response of pairs. 

\textbf{Reward modelling for RLHF.} It is standard to assume that the pairwise preference labels $L$, are dependent on unobservable rewards $R$ and $R'$ assigned internally by the individual to each of $(X, Y)$ and $(X, Y')$, respectively. Rewards are determined by a function $r: \Sigma^* \rightarrow \mathbb{R}$, so that $R = r(X, Y)$ and $R'=r(X, Y')$. The likelihood of the option $(X, Y)$ being preferred over $(X, Y')$ is described by the Bradley-Terry-Luce (BTL) model \citep{bradley_rank_1952}:
\begin{equation}
    P((X, Y) \succ (X, Y')) = \sigma(r(X, Y) - r(X, Y')),
\end{equation}
where $\sigma$ stands for the sigmoid function. The reward function $r$ is approximated by a parametric model $r_\theta : \Sigma^* \rightarrow \sR$ whose parameters $\theta$ are chosen by minimising the negative log-likelihood of the observed examples $(x, y, y', \ell) \in \gD$:
\begin{equation}
    \gL(\theta) = -\sum_{(x, y^w, y^\ell) \in \gD}\log\sigma(r_\theta(x, y^w) - r_\theta(x, y^\ell)),
\end{equation}
where $y^w$ indicates the winning response and $y^\ell$ the loosing one. The fitted reward function is subsequently used in the RL stage to provide feedback on generations of the LLM. 

\textbf{Causality and Potential Outcomes.} The potential outcomes framework \citep{rosenbaum_central_1983, splawa-neyman_application_1990} provides a formal approach to causal inference by conceptualising causation in terms of interventions. At its core, the framework models how an outcome of interest would differ under different interventions, enabling reasoning about causal effects.   It considers a set of \textbf{units} (e.g., individuals) to which a \textbf{treatment} or intervention is applied. For each unit, the treatment $T$ can take on different values $t$, where most commonly we have either $T=1$ for receiving the treatment and $T=0$ for being in the control group; see e.g., \cite{lopez_estimation_2017} for extensions to multiple treatments. In the case of binary treatments, each unit has two \textbf{potential outcomes} denoted as $Y(T=1)$, or in short $Y(1)$--the outcome if the unit receives the treatment, and $Y(T=0)\equiv Y(0)$--the outcome if the unit does not. However, only one of these outcomes is observed (the factual outcome $Y$), while the other (the counterfactual outcome) remains unobserved. The goal of causal inference is to estimate the potential outcomes under both treatments, or their difference $Y(1) - Y(0)$, known as the causal effect.

\section{The Causal Framework}

\begin{figure}[h]
    \centering
    \vspace{-0.5em}
    \begin{tikzpicture}[]
        \node[obs] (Y) {$Y$};
        \node[obs, below=of Y, yshift=-0.6cm] (Y_p) {$Y'$};
        \node[latent, right=of Y] (R) {$R$};
        \node[latent, right=of Y_p] (R_p) {$R'$};
        \node[obs, left=of Y] at ($(Y)!0.5!(Y_p)$) (X) {$X$};
        \node[obs, right=of R] at ($(R)!0.5!(R_p)$) (L) {$L$};

        \node[above=of L, yshift=-0.9cm] (label_L) {Outcome};
        \node[above=of X, xshift=0.7cm, yshift=0.1cm] (label_T) {Treatment};
        \node[right=of label_T, xshift=-0.25cm] (label_R) {Rewards};

        \plate[inner sep=0.3cm, dotted, yshift=0.3cm] {plate1} {(X) (Y) (Y_p)} {};
        \plate[inner sep=0.3cm, dotted, yshift=0.3cm, xshift=-0.05cm] {plate1} {(R) (R_p)} {};

        \edge {Y} {R};
        \edge{Y_p} {R_p};
        \edge{X} {Y, R};
        \edge{X} {Y_p, R_p};
        \edge{R, R_p} {L};
    \end{tikzpicture}
    \vspace{-0.5em}
    \caption{\textit{The causal model of preferences.} Given prompt $X$ and the two responses $Y$, $Y'$ users assigns them unobservable rewards $R$, $R'$ determining the preference label $L$.}
    \label{fig:response-model}
    \vspace{-0.5em}
\end{figure}
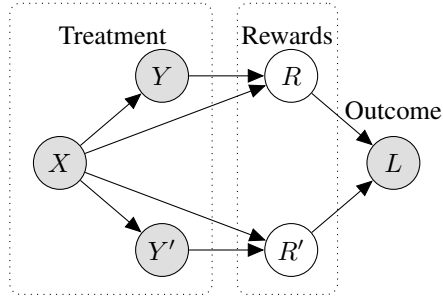

We can think of the observed tuples $(X, Y, Y')$ as treatments assigned to human labellers tasked with selecting a response that they prefer and the observed labels $L$ as outcomes. Treatments are assigned according to some (often unknown) propensities: $\pi(x, y, y;) := P(X=x, Y=y, Y'=y')$, where in most cases we have that $Y$ and $Y'$ are conditionally independent given $X$. The distributions $P(Y\vert X=x)$ and $P(Y'\vert X=x)$ are not necessarily the same. For instance,  given $X$, $Y$ can be sampled from a base LLM policy and $Y'$ from a policy controlled by the researcher in a pre-defined way. We assume that conceptually, each individual providing their preferences is a priori associated with a set of potential outcomes $L(X=x; Y=y, Y=y') \equiv L(x; y, y')$, for any two responses $y, y' \in \gY$, and prompt $x \in \gX$. Potential outcomes capture the hypothetical preferences for all pairs of texts that could have been observed. If we had a way of knowing $L(x; y, y')$, for any $x \in \gX$ and $y, y' \in \gY$, we could answer counterfactual questions regarding user preferences for hypothetical LLM's responses. However, in reality, we only observe the preference choice associated with the treatments actually received: $(X, Y, Y')$.

We can adopt the potential outcomes framework to the BTL model by introducing the notion of potential rewards, $R(x, y)\equiv R(X=x, Y=y)$ and $R'(x, y')\equiv R'(X=x, Y'=y')$, representing the hypothetical rewards assigned by an individual to any prompt-response pair for $x \in \gX$, $y, y' \in \gY$ in our corpus so that:
\begin{align}
\E[L(x; y, y')] &=  P(L(x; y, y')=1) \nonumber \\ 
&= \sigma(R(x, y) - R'(x, y')),
\end{align}
Thus, the BTL model directly relates the difference of potential rewards with the expected potential outcome.\footnote{The notion of potential rewards $R(x, y)$ and $R'(x, y')$ should not be confused with the reward function. $R$ and $R'$ represent two random variables which, in principle, can have distinct distributions so that $P(R(X=x, Y=y)) \neq P(R'(X=x, Y'=y))$--this can be the case, for instance, if the response seen on the left is systematically valued higher than the response seen on the right. Most commonly, it is however assumed that $R = r(X, Y)$ and $R' = r(X, Y')$ for a deterministic reward function $r: \Sigma^* \rightarrow \sR$ in which case $P(R(X=x, Y=y)) = P(R'(X=x, Y'=y))$.}

\begin{bluebox}{}
\textbf{The fundamental challenge.} The fact that for a given unit we can observe the outcome $L$ only for the observed treatment condition\footnote{or at most a finite set of treatments $\{(X_i, Y_i, Y_i')\}_{i=1}^n$.} $(X, Y, Y')$ is known as the fundamental challenge of causal inference. Potential outcome prediction is related to the \textbf{generalisation} challenge in conventional machine learning terminology—our goal is to predict the effect of hypothetical interventions on LLM's responses, by estimating $\E[L(x; y, y')]$ for any $x\in\gX$ and $y, y' \in \gY$.
\end{bluebox}

\vspace{-0.5em}
\subsection{What makes potential outcomes identifiable?}

\vspace{-0.25em}
Causal inference provides a framework to answer counterfactual, 'what if' questions even when only observational data is available. The key part lies in ensuring that the causal quantity of interest can be estimated from observational data alone. The following commonly made assumptions, adapted to our preference learning setup, are sufficient to guarantee identifiability and non-parametric estimability:

\vspace{-0.25em}
\begin{yellowbox}{}
\vspace{-1em}
\begin{restatable}[Consistency]{assumption}{consistency}
\label{asp:consistency}
    For an individual with prompt-response assignment $(X,Y,Y')$, we observe the associated potential outcome, i.e. $L = L(X; Y, Y')$.
\end{restatable}
\end{yellowbox}
\vspace{-0.25em}

\begin{yellowbox}{}
\vspace{-1em}
\begin{restatable}[Unconfoundedness]{assumption}{unconfoundedness}
\label{asp:unconf}
    There are no unobserved confounders, so that $L(x; y, y') \indep (X, Y, Y') $, for all $x \in \gX$, $y, y' \in \gY$. 
\end{restatable}
\end{yellowbox}
\vspace{-0.25em}

\begin{yellowbox}{}
\vspace{-1em}
\begin{restatable}[Unconditional Positivity]{assumption}{overlap}
\label{asp:overlap}
    Treatment assignment is non-deterministic, i.e. $0 < P(X = x, Y = y, Y' = y') < 1$ for all $x \in \gX$ and $y, y' \in \gY$. 
\end{restatable}
\end{yellowbox}

\begin{redbox}{}
\vspace{-1em}
\begin{restatable}{proposition}{identifiability}
\label{prop:effect-identify}
    Under assumptions (\ref{asp:consistency}), (\ref{asp:unconf}) and (\ref{asp:overlap}), for all $x \in \gX$, $y, y' \in \gY$,
    \begin{equation*}
    \E\left[L(x; y, y')\right] = \E\left[L \vert X=x, Y=y, Y'=y'\right],  
    \end{equation*}
    so that observed statistical associations have a causal interpretation.
\end{restatable}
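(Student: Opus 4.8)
The plan is to run the standard identification argument, specialised to the preference-learning treatment variable $(X,Y,Y')$: starting from the purely observational quantity $\E[L \mid X=x, Y=y, Y'=y']$ on the right-hand side, I would rewrite it into the interventional quantity $\E[L(x;y,y')]$ by invoking the three assumptions in turn. Positivity (Assumption \ref{asp:overlap}) is what licences the whole manipulation: since $P(X=x, Y=y, Y'=y') > 0$ for every $(x,y,y') \in \gX \times \gY \times \gY$, the conditional expectation is well-defined in each treatment cell, so the claimed equality is a statement about honest conditional probabilities rather than a vacuous one.

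The core of the argument is a two-step rewriting. First, consistency (Assumption \ref{asp:consistency}) gives $L = L(X;Y,Y')$ as random variables; hence on the conditioning event $\{X=x, Y=y, Y'=y'\}$ the factual label coincides with the single potential outcome $L(x;y,y')$, so that
\[
\E[L \mid X=x, Y=y, Y'=y'] = \E[L(x;y,y') \mid X=x, Y=y, Y'=y'].
\]
Second, unconfoundedness (Assumption \ref{asp:unconf}) provides $L(x;y,y') \indep (X,Y,Y')$, so conditioning on the received treatment leaves the distribution of the fixed-index potential outcome unchanged, and the right-hand conditional expectation collapses to the unconditional $\E[L(x;y,y')]$. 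Chaining the two equalities yields the claim, and re-expressing $\E[L(x;y,y')]$ through the BTL relation introduced above additionally recovers the reward-difference form $\sigma(R(x,y)-R'(x,y'))$ should one want it.

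The step I expect to require the most care is the first one, the substitution of $L(x;y,y')$ for $L$ inside the conditional expectation. It is tempting to treat consistency as a pointwise identity, but what is actually needed is that $L = L(x;y,y')$ holds almost surely \emph{on} the event $\{X=x,Y=y,Y'=y'\}$; stating it as a measure-theoretic almost-sure equality restricted to that cell avoids the common fallacy of confusing $L(X;Y,Y')$ (the process evaluated at the random treatment) with $L(x;y,y')$ (the outcome at the fixed level) off the conditioning event. Unconfoundedness must likewise be applied to the specific potential outcome $L(x;y,y')$ for the fixed triple rather than to the random-index object, which is precisely why Assumption \ref{asp:unconf} is stated as holding simultaneously for all $x \in \gX$ and $y, y' \in \gY$. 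Beyond these two bookkeeping points the argument is entirely routine.
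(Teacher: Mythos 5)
Your proposal is correct and takes essentially the same approach as the paper: the paper's proof chains the identical two steps in the opposite direction, starting from $\E[L(x;y,y')]$, conditioning on the treatment via unconfoundedness (with positivity making the conditional expectation well-defined), and then invoking consistency to replace the potential outcome with the observed label $L$. Your additional measure-theoretic care (consistency as an almost-sure equality on the conditioning event, unconfoundedness applied to the fixed-index potential outcome) is a sound refinement of the same routine argument, not a different route.
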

\tcblower
\vspace{-2em}
\begin{proof} Appendix~\ref{appdx:identif-proofs}. \end{proof}    
\end{redbox}

\vspace{-0.25em}
We note that the absolute values of the potential rewards are, in general, not identifiable (any shift of the reward function by an arbitrary function of the prompt results in the same likelihood). However, we have: $R(x, y) - R(x, y') = \sigma^{-1}\bigl\{\E\left[L(x; y, y')\right]\bigr\} = \sigma^{-1}\bigl\{\E\left[L \vert X = x, Y=y, Y'=y'\right]\bigr\}$, so that the difference in potential rewards is a function of the observable distribution $P(L \vert X, Y, Y')$, and therefore it is identifiable. 

\begin{redbox}{}
\vspace{-1em}
\begin{corollary}
    The difference in potential rewards $R(x, y) - R'(x, y')$ is identifiable.
\end{corollary}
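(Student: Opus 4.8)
The plan is to invert the Bradley--Terry--Luce relationship and then invoke the identifiability of the expected potential outcome established in Proposition~\ref{prop:effect-identify}. The starting point is the potential-outcomes form of the BTL model, which asserts
\[
\E[L(x; y, y')] = \sigma\bigl(R(x, y) - R'(x, y')\bigr).
\]

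First I would observe that the sigmoid $\sigma$ is a strictly increasing bijection from $\mathbb{R}$ onto the open interval $(0,1)$, so its inverse (the logit) $\sigma^{-1} : (0,1) \to \mathbb{R}$ exists and is continuous. Since the right-hand side above is $\sigma$ evaluated at a real argument, the expected potential outcome $\E[L(x; y, y')]$ necessarily lies in $(0,1)$, and thus applying $\sigma^{-1}$ is well-defined. This yields
\[
R(x, y) - R'(x, y') = \sigma^{-1}\bigl\{\E[L(x; y, y')]\bigr\}.
\]

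Next I would substitute the conclusion of Proposition~\ref{prop:effect-identify}, which under Assumptions~\ref{asp:consistency}--\ref{asp:overlap} equates $\E[L(x; y, y')]$ with the observable conditional expectation $\E[L \mid X=x, Y=y, Y'=y']$. This gives
\[
R(x, y) - R'(x, y') = \sigma^{-1}\bigl\{\E[L \mid X=x, Y=y, Y'=y']\bigr\},
\]
whose right-hand side is a functional of the observational distribution $P(L \mid X, Y, Y')$ alone. Since identifiability means precisely that the causal quantity can be written as a function of the observable distribution, this establishes the claim for every $x \in \gX$ and $y, y' \in \gY$.

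The argument involves no genuine obstacle; the only point requiring care is confirming that $\sigma^{-1}$ is applied to an argument strictly inside $(0,1)$, which is guaranteed by the BTL parametrisation itself. Because absolute reward levels are unidentifiable (any prompt-dependent shift of the reward leaves the likelihood invariant, as noted above), it is essential that the conclusion is stated for the \emph{difference} of potential rewards rather than for $R(x,y)$ and $R'(x,y')$ separately; the inversion step is exactly what isolates this identifiable contrast.
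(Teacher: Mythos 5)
Your proof is correct and matches the paper's own argument exactly: the paper likewise inverts the BTL relation via $\sigma^{-1}$ and then applies Proposition~\ref{prop:effect-identify} to write $R(x,y) - R'(x,y') = \sigma^{-1}\bigl\{\E\left[L \vert X=x, Y=y, Y'=y'\right]\bigr\}$, concluding identifiability since the right-hand side is a functional of the observable distribution $P(L \vert X, Y, Y')$. Your added care about $\sigma^{-1}$ being well-defined on $(0,1)$ and the remark on why only the difference (not the absolute levels) is identifiable are both consistent with the paper's surrounding discussion.
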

\end{redbox}

The importance of assumptions \ref{asp:consistency}-\ref{asp:overlap} in enabling causal inferences from observational data makes us questions their feasibility in the context of typical preference data collection. In particular, we identify several potential challenges:


\begin{figure}[h]
    \vspace{-0.5em}
    \centering
    \subfigure[Self-written prompts]{
    \hspace{1em}
    \label{fig:unobserved-confounder}
    \begin{tikzpicture}
    \node[obs] (X) {$X$};
    \node[obs, above=of X, yshift=-0.5cm] (Y) {$Y$};
    \node[latent, right=of Y] (R) {$R$};
    \node[latent, right=of X] (C) {$C$};
    
    \edge {X} {Y};
    \edge {Y} {R};
    \edge {X} {R};
    \edge {C} {R};
    \edge {C} {X}
    \end{tikzpicture}
    }
    \quad
    \subfigure[Randomised prompts]{
    \label{fig:unobserved-adj}
    \hspace{1em}
    \begin{tikzpicture}
    \node[obs] (X) {$X$};
    \node[obs, above=of X, yshift=-0.5cm] (Y) {$Y$};
    \node[latent, right=of Y] (R) {$R$};
    \node[latent, right=of X] (C) {$C$};
    
    \edge {X} {Y};
    \edge {Y} {R};
    \edge {X} {R};
    \edge {C} {R};
    \end{tikzpicture}
    }
    \qquad
    \vspace{-0.75em}
    \caption{\textit{Confounding due to user-specific objectives.} a) The user-specific contextual variable $C$ can act as a confounder. If the prompts $X$ are written by the users themselves, $C$ affects the assigned rewards $R$, $R'$ and partially determines the treatment $(X, Y, Y')$. b) Even if $C$ is not confounding, it may influence the user specific rewards, introducing individual-level variation in treatment effects.
    }
    \label{fig:hidden-context}
    \vspace{-0.5em}
\end{figure}
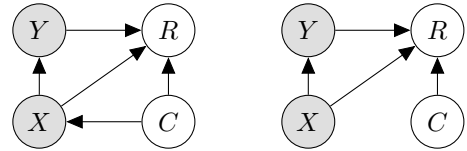

\begin{bluebox}{}
 \faLightbulb[regular] \textbf{Confounding bias.} The  assumption of unconfoundedness is about who scores the texts and not their contents. We stress that this assumption can be easily violated. This may be the case if the prompts $X$ are not randomly assigned, but written by the individuals who then score the generated responses. As a result, users scoring the LLM's responses of one kind may systematically differ from users scoring responses of another kind. We can formalise this by introducing a user-specific contextual variable $C$ that influences both $X$ and $R$ (see Figure~\ref{fig:hidden-context}).

\tcblower

\vspace{-1em}
\begin{example}[Violation of Unconfoundedness]
    Suppose the population consists of medical experts ($C=\mathrm{expert}$) and non-trained users ($C=\neg\mathrm{expert}$). Experts are more likely to ask questions about niche topics, leading to complex LLM responses filled with medical jargon, and are also more inclined to prefer such detailed responses due to their professional focus. In this case, the assumption of unconfoundedness would not hold, as the unobservable user-specific variable $C$, indicating the expertise of a user influences both the treatment assignment and their preference choices.
\end{example}  
\end{bluebox}

In the face of confounding due to user-specific covariates $C$, it is necessary to adjust for these variables to avoid the bias. Instead of focusing on the average $\E[L(x, y, y')]$, we shall instead consider: $\E[L(x, y, y') \vert C=c]$. Assumptions \ref{asp:unconf} and \ref{asp:overlap} must be then modified accordingly as: 2) $L(x; y, y') \indep (X, Y, Y') \vert C = c$ and 3) $0 < P(X = x, Y = y, Y' = y'\vert C= x) < 1$ for all $x \in \gX$, $y, y' \in \gY$ and $c \in \gC$, where $\gC$ is the space of all levels of covariates. Throughout the rest of this work we will use the term \textit{positivity} as a shorthand for the unconditional case and the term \textit{conditional positivity}, aka the \textit{overlap}, will be used when user-specific covariates are considered.

To the best of our knowledge, confounding due to user-specific covariates has not been addressed in prior works on preference learning for AI alignment. We find this issue of a significant importance and we will study it in greater depth in sections \ref{sec:latent-overlap-implications} and \ref{sec:case-study}.

\textbf{Heterogenous treatment effects.} Even under prompt randomisation, the characteristics of a user or other forms of unobservable context (represented by the variable $C$ in Figure \ref{fig:hidden-context}) may influence the rewards assigned to each response, leading to heterogeneity in preferences among the users. In this case, implicitly marginalizing over $C$ inflates the variance of the predictions and may mask the true causal relationships present among different subpopulations characterised by distinct $C$'s. \citet{siththaranjan_distributional_2024} refer to this variable as the ``hidden context'' and show that averaging over $C$ is equivalent to adopting the so called \textit{Borda count} rule, which may lead to counter-intuitive results. From the point of view of causal inference, $C$ introduces individual-level variation, leading to heterogenous treatment effects where any given treatment (here, any given response) might affect different users in different ways. If not adequately addressed, this in turn may lead to underrepresentation issues \citep{wu_stable_2023}, resulting in allocation of suboptimal treatments for underrepresented populations. In our context, an LLM fine-tuned to the average preferences may generate suboptimal responses for certain subgroups of users. This underscores the importance of not only focusing on the population average effects, $\E[L(x, y, y')]$, but also subgroup-level effects: $\E[L(x, y, y') \vert C=c]$.

The assumption of overlap requires that every user has a non-zero probability of being assigned any combination of texts given their covariates. In conventional, non-parametric treatment effect estimation, this ensures that we have enough data to estimate the causal effects across all levels of treatments and covariates. In our case, however, treatments belong to the extremely high-dimensional space of natural language. We only observe a small subset of potential prompt-response pairs, each often being scored by at most one user, while in general, our goal is to \textit{generalise} to texts not part of the training corpus. With no sufficient overlap in the observational space generalisation of reward models is possible by assuming that the observed texts can be compressed into a lower-dimensional latent representations capturing all \textbf{latent features} of texts that influence the rewards assigned.

\subsection{Robust Generalisation via Latent Treatments}\label{sec:latent-treatment-model}

We present a model of latent treatments allowing us to consider assigned texts not on the literal level, but in terms of their underlying features. This formulation is key to facilitating generalisation to unseen texts and enables the learning of human-interpretable reward models, supporting the design of targeted interventions (see Appendix~\ref{appdx:extended-discussion}). While appealing, the introduction of the latent treatment model presents its own set of challenges. For brevity and clarity, we focus on a single prompt-response pair $(X, Y)$, assuming the same applies to the alternative $(X, Y')$.

\begin{figure}[h]
    \centering
    \begin{tikzpicture}
    \node[obs] (X) {$X$};
    \node[obs, above=of X, yshift=-0.2cm] (Y) {$Y$};
    \node[latent, right=of Y, yshift=-0.2cm] (Z^T) {$Z^T$};
    \node[latent, right=of X, yshift=0.2cm] (Z^X) {$Z^X$};
    \node[latent, right=of Z^T] at ($(Z^T)!0.5!(Z^X)$) (R) {$R$};
    
    \plate[inner sep=0.2cm, dotted, yshift=0.2cm] {plate1} {(Z^X) (Z^T)} {}; 
    \node[above=of Z^T, yshift=-1cm] (Z) {\small{Latent Factors} $Z$};
    
    \edge {X} {Y};
    \edge {Y} {Z^T};
    \edge {X} {Z^T, Z^X};
    \edge {Z^T, Z^X} {R};
    
    \end{tikzpicture}
    \vspace{-0.5em}
    \caption{\textit{The latent treatment model.} The effect of observed texts on $R$ can be compressed into a set of latent variables $Z$ partitioned into two kinds: $Z^X$--the artifacts of $X$ and $Z^T$--the latent treatments determined jointly by $X$ and $Y$.}
    \label{fig:latent-treatment-model}
    \vspace{-1em}
\end{figure}
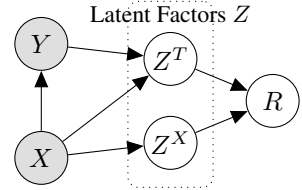

We assume that features of texts that affect the rewards can be effectively summarised into a set of latent features $Z = \{Z_1, \ldots, Z_n\} \in \gZ$ split into two parts: $Z^X$--artefacts of the prompt influenced by $X$ only, and $Z^T$--the latent treatments influenced by $(X, Y)$ jointly. $Z^X$ may describe features like the topic of the conversation or the type of task involved. On the other hand, $Z^T$ contains the features of the response $Y$ that cannot be determined without putting them in the context of $X$. This involves, for instance, factual correctness or instruction-following. In our definition, $Z^T$ also subsumes the artifacts of the response that can be extracted from $Y$ without having access to $X$, like its style or length. 

Formally, we assume existence of a feature extracting function $g: \Sigma^* \rightarrow \gZ$, $(X, Y) \mapsto Z = [Z^X, Z^T]$, that can be decomposed into two maps: $g^X: \Sigma^* \rightarrow \gZ^X$ and $g^T: \Sigma^* \rightarrow \gZ^T$, s.t. $Z^X = g^X(X)$  and $Z^T = g^T(X, Y)$, ${Z'}^T = g^T(X, Y')$. The assumption of sufficiency of the latent factors in determining the rewards assigned can be then defined as:

\begin{yellowbox}{}
\vspace{-1em}
\begin{restatable}[Latent Sufficiency]{assumption}{latentsufficiency}
\label{asp:suff}
Assume there exists functions $g^X: \Sigma^* \rightarrow \gZ^X$ and $g^T: \Sigma^* \rightarrow \gZ^T$ such that for any $x\in \gX$, $y, y' \in \gY$ we have 
    \begin{align*}
        R(X=x, Y=y) &= R(Z^X = z^X, Z^T = z^T) \\ 
        R'(X=x, Y=y') &= R'(Z^X = z^X, Z'_T = z'_T),
    \end{align*}
    where $z^X = g^X(x)$, $z^T = g^T(x, y)$, ${z'}^T = g^T(x, y')$.
\end{restatable}
\end{yellowbox}
\vspace{-0.25em}

\begin{bluebox}{}
 \faLightbulb[regular] \textbf{Why $Z^X$ matters?} If the effect of $Z^X$ on $R$ is \textit{additive} so that for any $x, y$ we have $R(x, y) = f_1(z^T) + f_2(z^X)$, then we could omit $Z^X$ from the model, as it would have no influence on the outcome $L$ defined by $R(x, y) - R'(x, y')= f_1(z^T) - f_1({z'}^T), \ \forall x, y, y'$. Our claim, however, is that the effect of $Z^X$ is rarely of an additive nature and thus, it cannot be disregarded.
\tcblower
\vspace{-1em}
\begin{example}[Non-additive effects of $Z^X$]
Consider $Z^X$, representing the task type (e.g., summarisation or creative writing), and $Z^T \in \sR$, the conciseness of a response. While in general, responses should not be excessively short or excessively long, the optimal level of conciseness varies between the two task types, introducing a non-additive interaction effect between $Z^X$ and $Z^T$ which could be, for instance, described as:
    \begin{equation*}
        R(x, y) = \begin{cases}
            \beta_0(z^T - \gamma_0)^2 & \text{if } z^X =  \text{summarisation} \\
            \beta_1(z^T - \gamma_1)^2 & \text{if } z^X =  \text{creative writing},
        \end{cases}
    \end{equation*}
    where we would expect that $\gamma_0 > \gamma_1$, reflecting that the optimal conciseness level for summarisation is greater (i.e., more concise) than for creative writing.
\end{example}  
\end{bluebox}

The information-compressing nature of $g$ makes it possible to estimate the causal effects $\E[L(x; y, y')]$ for prompt-response pairs not previously observed in our training corpus. In particular, instead of requiring that the positivity assumption holds in the observable space, it is sufficient to consider the weaker assumption of \textbf{latent positivity}:

\begin{yellowbox}{}
\vspace{-1em}
\begin{restatable}[Unconditional Latent Positivity]{assumption}{latentoverlap}
\label{asp:latent-overlap}
    For all $z^X \in \gZ^X$ and $z^T, {z'}^T \in \gZ^T$
    \vspace{-0.5em}
    \begin{equation*}
    0 < P(Z^T = z^T, Z'_T = {z'}^T, Z^X = z^X) < 1.
    \end{equation*}
\end{restatable}
\end{yellowbox}
\vspace{-0.25em}

\begin{redbox}{}
\vspace{-1em}
\begin{restatable}[]{proposition}{latentidentifiability}
 \label{prop:effect-latent-identify}
    Under assumptions \ref{asp:consistency}, \ref{asp:unconf}, \ref{asp:suff}, and \ref{asp:latent-overlap} 
    \begin{equation*}
    \resizebox{7.9cm}{!}{
    $\E\left[L(x; y, y')\right] = \E\left[L \vert Z^X=z^X, Z^T = z^T, {Z'}^T={z'}^T\right],$} 
    \end{equation*}
    for $z^X= g^X(x)$, $z^T = g^T(x, y)$ and ${z'}^T=g^T(x, y')$.   
\end{restatable}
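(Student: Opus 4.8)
The plan is to show that Proposition~\ref{prop:effect-latent-identify} follows by combining the latent-sufficiency reduction (Assumption~\ref{asp:suff}) with the same identification argument already used to prove Proposition~\ref{prop:effect-identify}, only now carried out in the latent space $\gZ$ rather than the observable text space $\Sigma^*$. Concretely, I would first use Assumptions~\ref{asp:consistency} and \ref{asp:suff} to rewrite the potential outcome in terms of latent potential rewards. By the BTL relation established earlier, $\E[L(x;y,y')] = \sigma\bigl(R(x,y) - R'(x,y')\bigr)$, and Latent Sufficiency lets me replace $R(x,y)$ by $R(Z^X=z^X, Z^T=z^T)$ and $R'(x,y')$ by $R'(Z^X=z^X, Z'_T={z'}^T)$, where $z^X=g^X(x)$, $z^T=g^T(x,y)$, ${z'}^T=g^T(x,y')$. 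This reduces the claim to showing that the latent-space causal effect equals the conditional expectation given the latent variables.

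The second step is to run the standard identification chain in the latent coordinates. I would argue that the latent features inherit unconfoundedness: since $(Z^X, Z^T, {Z'}^T)$ are deterministic functions of $(X,Y,Y')$ via $g^X, g^T$, the independence $L(x;y,y')\indep (X,Y,Y')$ from Assumption~\ref{asp:unconf} implies $L(z^X;z^T,{z'}^T)\indep (Z^X, Z^T, {Z'}^T)$ (independence is preserved under measurable transformations of the conditioning variables). With this latent unconfoundedness in hand, together with Latent Positivity (Assumption~\ref{asp:latent-overlap}) guaranteeing that the conditioning event $\{Z^X=z^X, Z^T=z^T, {Z'}^T={z'}^T\}$ has positive probability, and Consistency ensuring $L = L(Z^X; Z^T, {Z'}^T)$ on the observed latent assignment, I can then write
\begin{align*}
\E[L(x;y,y')] &= \E[L(z^X; z^T, {z'}^T)] \\
&= \E[L(z^X; z^T, {z'}^T) \mid Z^X=z^X, Z^T=z^T, {Z'}^T={z'}^T] \\
&= \E[L \mid Z^X=z^X, Z^T=z^T, {Z'}^T={z'}^T],
\end{align*}
where the first line is the sufficiency reduction, the second uses latent unconfoundedness to insert the conditioning, and the third applies consistency. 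This is exactly the argument of Proposition~\ref{prop:effect-identify} with $(X,Y,Y')$ replaced by $(Z^X, Z^T, {Z'}^T)$.

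The main obstacle I anticipate is making the latent-sufficiency reduction fully rigorous, in particular justifying that replacing the observable treatment by its latent summary does not change the relevant conditional distribution of the potential outcome. Latent Sufficiency as stated is an assumption about the potential rewards $R, R'$ being functions of $Z$ alone; I need to make sure this genuinely implies that the distribution of the potential outcome $L(x;y,y')$ depends on $(x,y,y')$ only through $(z^X, z^T, {z'}^T)$, which follows because $L$ is linked to $R, R'$ solely through the BTL model. A second, subtler point is the well-definedness of the latent potential outcome $L(z^X; z^T, {z'}^T)$ and the pushforward of the unconfoundedness statement: distinct texts $(x,y,y')$ may map to the same latent triple, so I should note that the reduction collapses these into a single equivalence class and that the independence and positivity statements are to be read with respect to the induced distribution of $Z$. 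Once these measure-theoretic bookkeeping details are settled, the remainder is a verbatim transcription of the earlier proof, so I expect the argument to be short and to defer the full details to Appendix~\ref{appdx:identif-proofs}.
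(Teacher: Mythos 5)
Your proposal is correct and follows essentially the same route as the paper's proof: establish that consistency and unconfoundedness transfer to the latent space (using sufficiency and the fact that $L$ depends on the texts only through the BTL-linked rewards, which the paper makes explicit via a Gumbel noise representation $L = f(R-R', U)$), then run the identical three-step chain — sufficiency reduction, conditioning via latent overlap and unconfoundedness, and latent consistency. The measure-theoretic caveats you flag (well-definedness of the latent potential outcome when distinct texts share a latent triple) are exactly the points the paper handles with its exogenous-noise re-expression, so there is no substantive difference between the two arguments.
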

\tcblower
\vspace{-2em}
\begin{proof} Appendix~\ref{appdx:identif-proofs}. \end{proof}    
\end{redbox}

\begin{bluebox}{}
 \faLightbulb[regular] \textbf{The significance of sufficiency and latent positivity} is that, if these assumptions hold, it is possible to estimate the potential outcome for a previously unobserved examples $(x, y, y')$, by only considering the average outcomes of texts observed within our corpus that have the same latent structure as $(x, y, y')$.
\end{bluebox}

\subsubsection{Implications for practitioners}\label{sec:latent-overlap-implications}

\begin{bluebox}{}
 \faLightbulb[regular] \textbf{Limited latent positivity.} The assumption of latent positivity requires that all combinations of latent factors need to be observable. However, it may be that certain factors are perfectly correlated with each other, in which case disentangling their effects is not possible. When the positivity is limited, i.e. when strong correlations exists, it hinders the efficiency of estimators \citep{hahn_role_1998, hirano_efficient_2003, crump_moving_2006}, requiring large amounts of data for robust results. 
\tcblower
\vspace{-1em}
\begin{example}[Correlated latent factors]
\leavevmode

\vspace{-0.25em}
\textit{a) Topic and Tone:} Suppose $Z^X$ stands for the topic of the prompt and one of $Z^T_i$'s, indicates the presence of a formal tone. Professional, work-related topics may naturally lead to LLM responses with a formal tone so that $P(Z^T_{i} = {\text{informal}}, Z^X= \text{professional})$ and $P(Z^T_{i} = \text{formal}, Z^X= \text{causal})$ being near 0.

\vspace{-0.25em}
\textit{b) Completeness and Conciseness:} Suppose $Z^T_1$ represents whether a response is complete and $Z^T_2$ indicates whether it is succinct. Complete responses often require elaboration, making succinct yet complete responses rare, i.e., $P(Z^T_1 = 1, Z^T_2 = 1)$ close to 0. 
\end{example}  
\end{bluebox}

\textbf{Interpretable causal effects.} If the latent factors $Z$ carry a human-interpretable meaning, we can draw further insights regarding the \textit{causal effects} of each individual factor on user preferences. Appendix~\ref{appdx:amce} defines the appropriate measurement tools. We also make connections on how such causally-interpretable models can enable targeted modifications to LLM's responses and thus enable collection of interventional data (Appendix~\ref{appdx:roadmap}).

\textbf{Discovery of $Z$.} Thus far, we have assumed that the mapping from raw observations to latent causal factors is given. In practice, it needs to be learned from the available data. Since it is not plausible to obtain an exhaustive list of all causal factors and label each example in the corpus with respect to these factors, we must instead shift towards \textit{causal discovery}--which is significantly more challenging \citep{scholkopf_toward_2021}. Discovering $Z$ in a an unsupervised or semi-supervised fashion (i.e., without explicit labels for the latent factors or only with partial labels) creates the risk of \textit{causal misidentification} \citep{locatello_challenging_2019, makar_causally_2022, brehmer_weakly_2022, ahuja_interventional_2023}, where the learned rewards mistakenly rely on spurious features (e.g., length \citep{singhal_long_2024, chen_odin_2024} or formatting \citep{zhang_lists_2024}) instead of the true causal factors. This in turn leads to robustness issues. The learned latent representations cannot capture features that are spuriously correlated with true causal factors. At the same time, none of the causal factors can be omitted--not measuring all causal factors influencing the rewards will necessarily lead to omitted-variable bias \citep{fong_causal_2023}.

\begin{table}
        \centering
        \setlength\tabcolsep{5pt}
        \caption{Test time accuracy [\%] of the BTL models with varying values of latent positivity in the observational dataset.}
        \label{tab:uf-example-results}
        \vspace{-1em}
        \begin{tabular}{lllll}
        \toprule
        $\rho^{tr}$ & 0.0 & 0.3 & 0.6 & 0.9 \\
        \midrule
        ID & 69.3 ± 0.2 & 68.3 ± 0.2 & 67.8 ± 0.2 & 67.6 ± 0.2 \\
        OOD & 64.5 ± 0.2 & 62.0 ± 0.2 & 59.7 ± 0.2 & 57.8 ± 0.1 \\
        \bottomrule
        \end{tabular}
        \vspace{-1.5em}
    \end{table}
    
\begin{greenbox}{}
    \textbf{Experiment} (Limited latent positivity). We illustrate the significance of the latent positivity by examining its influence on standard BTL models. Using the UltraFeedback dataset \citep{cui_ultrafeedback_2024}, we consider the \textit{truthfulness} and \textit{instruction following} factors of each prompt-response pair, denoted $Z_1$ and $Z_2$, respectively, and scored from 0 to 5. We construct five training datasets by varying the correlation coefficient $\rho^{tr}$ between $Z_1$ and $Z_2$. We let the reward function be $r(x, y) = \frac{1}{4}z_1 + \frac{3}{4}z_2$, with the true values of $z_1$ and $z_2$ not available for training. The datasets consists of tuples $(x, y, y', \ell)$ with $\ell$ determined by the function $r$. We assess the robustness of the learned reward models to shifts in the correlation of $Z_1$ and $Z_2$, testing on previously unseen examples either from the same distribution as the training examples (ID: $\rho^{test} = \rho^{tr}$), or not, with $\rho^{test}$ being \textit{negative} (OOD: $\rho^{test} < 0$). Refer to Appendix~\ref{appdx:uf-details} for details.
    \tcblower
    
    As shown in Table~\ref{tab:uf-example-results}, performance on unseen ID examples remains high across all $\rho^{tr}$ values, demonstrating that latent sufficiency enables generalisation to unobserved prompt-response pairs. However, with OOD examples breaking the training-time correlation between $Z_1$ and $Z_2$, performance significantly declines due to reduced latent positivity. The ID-OOD accuracy gap widens as $\rho^{tr}$ increases. Even at a moderate $\rho^{tr}=0.6$, accuracy drops to 59.7\%, compared to 67.8\% on unseen ID examples (note, the correlation between truthfulness and instruction-following across the entire UltraFeedback dataset is precisely 0.63). Appendix~\ref{appdx:uf-analysis} provides further analysis and a visualisation explaining this behaviour.

    We note that this experimental setup does not strictly violate the positivity assumption, making identifiability possible in the infinite data limit. However, perfect correlation is unrealistic in practice. The focus of this experiment is on the statistical challenges posed by limited latent overlap. The observed performance degradation reflects a statistical issue arising from near-violations of the positivity assumption, rather than a fundamental identifiability failure.
\end{greenbox}

\begin{bluebox}{}
\faLightbulb[regular] \textbf{Confounding \& Latent Overlap.} The challenges of reward modelling are exacerbated even further when confounding effects are present, in which case we require that $0< P(Z^X=z^X, Z^T=z^T, {Z'}^T={z'}^T\vert C=c) < 1$, for all $z^X \in \gZ^X$, $z^T, {z'}^T \in \gZ^T$ and $c\in \gC$. If $C$ represents an objective according to which the LLM's responses are evaluated and this objective $C$ also affects the type of prompts that the user writes, then the latent overlap is likely to be particularly limited.

\tcblower
\vspace{-1em}
\begin{example}\label{ex:overlap-confounding}
 Imagine two groups of labellers, each with a distinct objective $C \in \{0, 1\}$. For $C=0$, a labeller is focused on assessing the \textit{helpfulness} of the model, while for $C=1$, they are focused on assessing the \textit{harmlessness} of the model. These differing objectives lead the two groups to generate prompts with distinct intents: the helpfulness-focused ($C=0$) produce assistance-related prompts, while the harmlessness-focused ($C=1$) generate prompts designed to elicit harmful behaviour. Consequently, the prompt distributions $P(X \vert C=0)$ and $P(X \vert C=1)$ are distinct, potentially leading to overlap violations in $P(Z|C)$. Despite this, we wish to answer interventional questions regarding the preference choices of the helpfulness-focused users ($C=0$), if presented examples $(x, y, y')$, with $x \sim P(X\vert C=1)$ and vice versa (see Figure~\ref{fig:intervention}). With no latent overlap, fundamental assumptions of causal inference deem this is an infeasible task. In section~\ref{sec:case-study} we will study this case from an empirical perspective.
\end{example}
\end{bluebox}

Even when the ground-truth latent overlap holds, learning robust representations $\hat{Z}$ from finite datasets poses significant challenges. Correlations between the observed examples $(X, Y, Y')$ and the objectives $C$ in the training data can lead to $\hat{Z}$ entangling features of the text with user-specific objectives. As a result, $\hat{Z}$ may fail to generalise to examples with $X$'s and $C$'s no longer correlated.


\begin{figure}[h]
    \centering
    \subfigure[Observed data]{
    \begin{tikzpicture}
    \node[obs] (X) {$X$};
    \node[obs, above=of X, yshift=-0.5cm] (Y) {$Y$};
    \node[latent, right=of Y, xshift=-0.5cm] (Z) {$Z$};
    \node[latent, right=of Z, xshift=-0.5cm] (R) {$R$};
    \node[obs, right=of X] (C) {$C$};
    
    \edge {X, Y} {Z};
    \edge{X} {Y};
    \edge {Z} {R};
    \edge {C} {R};
    \edge {C} {X}
    \end{tikzpicture}
    }
    \quad
    \subfigure[Intervention on $X$]{
    \begin{tikzpicture}
    \node[obs] (X) {$X$};
    \node[obs, above=of X, yshift=-0.5cm] (Y) {$Y$};
    \node[latent, right=of Y, xshift=-0.5cm] (Z) {$Z$};
    \node[latent, right=of Z, xshift=-0.5cm] (R) {$R$};
    \node[obs, right=of X] (C) {$C$};
    \node[anchor=west] at ([xshift=-0.6cm]X.west) {\faHammer};

    \edge {X, Y} {Z};
    \edge{X} {Y};
    \edge {Z} {R};
    \edge {C} {R};
    \end{tikzpicture}
    }
    \caption{A reward model relying on the set of true causal factors $Z$ enables prediction of outcomes under targeted interventions. We wish to predict the rewards when intervening on the prompt distribution $X$, breaking the correlations between prompt type and user-specific objectives $C$.}
    \label{fig:intervention}
    \vspace{-1em}
\end{figure}
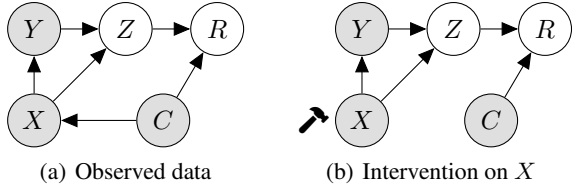

\section{Case Study: The Challenge of Confounding}\label{sec:case-study}

\begin{figure*}[t]
    \vspace{-0.5em}
    \centering
    \subfigure[\textit{Base}]{
    \label{fig:base}
    \includegraphics[width=0.20\linewidth]{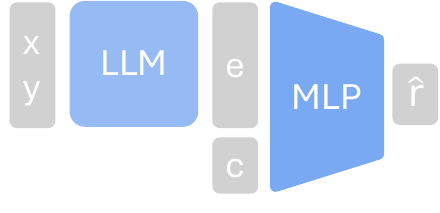}
    }
    \hspace{0.5em}
    \subfigure[\textit{Multihead}]{
    \label{fig:multihead}
    \includegraphics[width=0.33\linewidth]{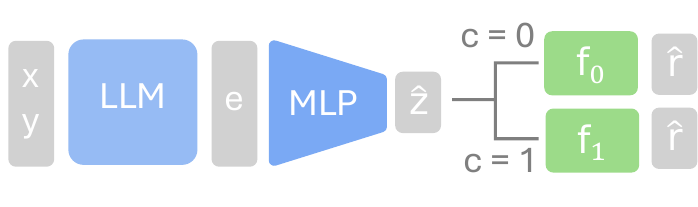}
    }
    \hspace{0.5em}
    \subfigure[\textit{Adversarial}]{
    \label{fig:adversarial}
    \includegraphics[width=0.33\linewidth]{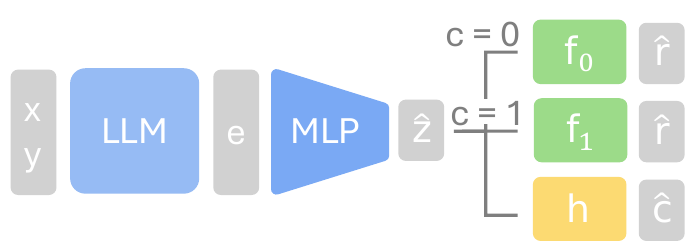}
    }
    \vspace{-1em}
    \caption{\textit{Comparison of model architectures.} 
    }
    \label{fig:model-architectures}
    \vspace{-1em}
\end{figure*}

Modern reward learning systems frequently rely on preference data collected opportunistically--a user types a query, two candidate responses are generated, and the user is asked to select the response they prefer. In such setups, user-specific objectives can act as \textbf{confounders}, influencing both the types of prompts the user formulates and the subsequent preference choices. This study highlights the challenges associated with robust reward learning in the presence of such confounding effects. To illustrate these issues, we revisit Example~\ref{ex:overlap-confounding} and simulate the described scenario utilising an augmented version of the well-known HH-RLHF dataset, as provided by \citet{siththaranjan_distributional_2024}.

\begin{greenbox}{}
The original HH-RLHF dataset \citep{bai_training_2022} contains prompt-response pairs categorised into two subsets: \textit{helpful} and \textit{harmless}. \citet{siththaranjan_distributional_2024} noted significant distributional differences between the prompts in these two subsets. Meanwhile, all prompt-response pairs in the helpful subset are evaluated based on the objective of helpfulness, while those in the harmless subset are assessed according to the objective of harmlessness. To examine the challenges arising when users operate under potentially conflicting objectives, \citet{siththaranjan_distributional_2024} created an augmented version of this dataset, introducing synthetic, counterfactual labels so that texts in the helpful subset are also scored according to the harmlessness objective, and vice versa.

As in Example~\ref{ex:overlap-confounding}, we let $C \in \{0, 1\}$ encode the objective of the labeller. We also denote by $\mathrm{type}(x) \in \{0, 1\}$, whether a given example $(x, y, y')$ was originally part of the helpful or harmless subset. To control the degree of confounding due to $C$, we define a parameter $\rho$ that measures the alignment between a user’s objective and the type of prompts they write: $\rho = P(\mathrm{type}(X) = C)$. At $\rho = 1.0$, the prompt-response distribution recovers the original HH-RLHF dataset, with prompt type and user objective being perfectly correlated. In contrast, at $\rho = 0.5$, the prompt type is fully randomised, eliminating the confounding effect.

\textbf{The Goal.}  Our aim is to investigate how the confounding effects related to user-specific objective influence reward model performance and to explore mitigation strategies. We seek to answer the interventional questions, such as ``What would the preferences of the helpfulness-focused users ($C=0$) be, have they been presented examples $(x, y, y')$ with $\mathrm{type}(x) = 1$?''. We note that, the two objectives considered are not always aligned--a highly helpful response may not necessarily be harmless--making this a challenging machine learning problem.

\tcblower

\textbf{Dataset:} W create six training datasets varying the value of $\rho$. The datasets comprise of tuples $(x, y, y', c, \ell)$, with an equal number of examples for each objective $c \in \{0, 1\}$. Type labels are not part of the training data as they wouldn't be available in a real-world data collection setup. \textbf{Models:} We train three multi-objective reward models. The \textit{Base} model corresponds to the most straightforward architecture, where the objective label $c$ is concatenated with the reward model's inputs (Figure~\ref{fig:base}). In contrast, the \textit{Multihead} model is inspired by the ground-truth causal graph wherein the latent factors $Z$ are conditionally independent of $C$ given $(X, Y)$. The model learns a latent representation of the prompt-response pairs, $\hat{Z}$, passed to two independent prediction heads corresponding to the two objectives (Figure~\ref{fig:multihead}). Finally, the proposed \textit{Adversarial} model (Figure~\ref{fig:adversarial}) incorporates an additional adversarial objective \citep{ganin_domain-adversarial_2016} to regularise representation learning. This is inspired by recent advancements in representation learning for treatment effect estimation \citep{bica_estimating_2020, ozery-flato_adversarial_2020, du_adversarial_2021} (see section~\ref{sec:adversarial} for details). \textbf{Evaluation:} Trained models are evaluated on unseen prompt-response pairs derived from either of the data subsets and labelled according to objectives both consistent and inconsistent with their prompt type. Refer to Appendix~\ref{appdx:hhrlhf-details} for more details.   
\end{greenbox}

\vspace{-0.5em}
\subsection{The Adversarial Multi-objective Reward Model.}\label{sec:adversarial}

Due to the training time correlation between $\mathrm{type}(X)$ and $C$ the learned latent representations may mistakenly treat the information about $\mathrm{type}(X)$--easily extractable from the observations $(X, Y)$--as having a causal effect on $R$. The goal of the proposed adversarial training method is to make $\hat{Z}$ not predictive of $\mathrm{type}(X)$, while retaining all features that causally influence $R$. Let $g_\theta$ represent the network learning a latent representation $\hat{Z}$ from $(X, Y)$, s.t. $\hat{z} = g_\theta(x, y)$ for all $x, y \in\gX \times \gY$. In the \textit{Multihead} reward model, the rewards are obtained by passing the latent $\hat{z}$ to the respective reward head $f_{w_0}: \hat{\mathcal{Z}} \rightarrow \sR$ or $f_{w_1}: \hat{\mathcal{Z}} \rightarrow \sR$ so that:
\begin{equation}
r_{\theta, w_0, w_1}(x, y, c) = \begin{cases}
    f_{w_0}(g_\theta(x, y)) & \text{if } c = 0 \\
    f_{w_1}(g_\theta(x, y)) & \text{if } c = 1
\end{cases}    
\end{equation}
The \textit{Adversarial} model introduces an additional network $h_{\phi}: \mathcal{Z} \rightarrow \{0, 1\}$ whose goal is to predict the objective label $C$ from $Z$ as a proxy for $\mathrm{type}(X)$. Adversarial training enourages $g_\theta$ to discard the spurious information about $\mathrm{type}(X)$ while retaining causal features relevant to the outcome. The training objective for this model is:
\begin{align}
&\min_{\theta, w_0, w_1}\max_{\phi} \gL_{\mathrm{R}}(\theta, w_0, w_1) - \lambda\gL_{\mathrm{adv}}(\theta, \phi),
\end{align}
where $ \gL_{\mathrm{R}}(\theta, w_0, w_1)$ is the standard BTL loss for the reward function $r_{\theta, w_0, w_1}$, the second term $\gL_{\mathrm{adv}}(\theta, \phi)$ is the binary cross-entropy loss between the true $c$'s and their logprobabilities predicted by $h_\phi \circ g_\theta$, and $\lambda$ is a hyperparameter balancing the two objectives. Appendix~\ref{appdx:hhrlhf-details} contains further details regarding implementation and training.

\blfootnote{Code for reproducing the experiments is made available at: \href{https://github.com/kasia-kobalczyk/causal-preference-learning}{https://github.com/kasia-kobalczyk/causal-preference-learning}.}

\begin{figure}[h]
    \vspace{-2.5em}
    \centering
    \hspace{-0.3cm}
    \includegraphics[width=1.05\linewidth]{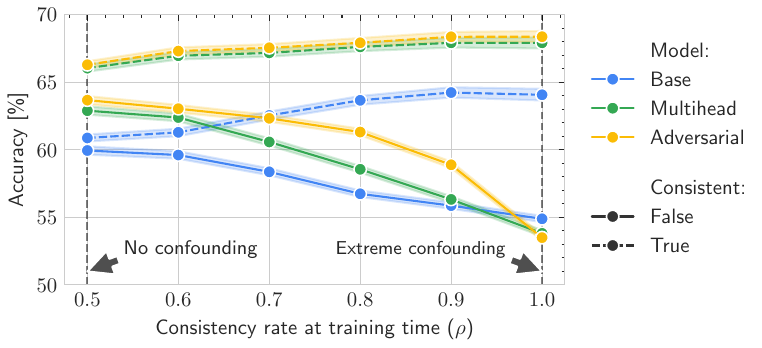}
    \vspace{-1.5em}
    \caption{\textit{Test time accuracy vs. confounding}. The label ``consistent'' indicates whether $\mathrm{type}(X) = C$. The causally-inspired multihead architecture with additional adversarial balancing significantly reduces overfitting and improves generalisation to the inconsistent OOD examples.}
    \label{fig:accuracy-vs-confounding}
    \vspace{-1em}
\end{figure}

\begin{greenbox}{}
\vspace{-1.5em}
\subsection{Case Study Results}

\tcblower

We analyse the results of the experiments summarised in Figure~\ref{fig:accuracy-vs-confounding} and presented in detail in Appendix~\ref{appdx:hhrlhf-results}.

$\blacktriangleright$~\textbf{No overlap $\Rightarrow$ failure to generalise.} When the objective is fully determined by the data subset ($\rho=1.0$), all three models exhibit substantial overfitting to the training distribution. While the performance on the unseen, consistent samples is high--with the multihead and adversarial architectures showing modest improvements--all models fail to generalise to inconsistent examples, with their accuracies falling just below 55\%. This outcome is expected, as the latent overlap assumption is likely violated at \(\rho = 1.0\). Higher \(\rho\) values lead to training sets with fewer inconsistent examples, making models more susceptible to overfitting and causal misidentification.  $\blacktriangleright$~\textbf{Increasing the overlap helps.} As expected, accuracy on inconsistent test samples improves across all models as \(\rho\) decreases in the observational training data.   $\blacktriangleright$~\textbf{Extra gains from causally-inspired models.} We observe that the \textit{Multihead} architecture significantly outperforms the \textit{Base} model, supporting our hypothesis that separation of latent representation learning from objective-conditioned reward prediction enhances robustness to training-time correlations between prompt types and objectives. Introducing the adversarial objective further strengthens this effect, significantly boosting accuracy, especially in strong confounding regimes.

\end{greenbox}

\section{Conclusions \& Limitations}

Below, we summarise key conclusions from this work, contextualise them within existing literature, and outline future research directions. Appendix~\ref{appdx:related-work} contains an extended discussion of the related work.

\textbf{The importance of data collection.} This study underscores the critical role of data collection mechanisms in reward learning for preference modelling. Controlled, randomised experiments, where prompt-response pairs are allocated randomly across a representative population serve as the gold standard. However, in practice, preference data is often collected opportunistically, relying on user-written prompts and LLM-generated responses. This approach introduces the risk of confounding, where users' latent objectives influence both the queries they pose and the feedback they provide, as demonstrated in our case study.

\textbf{The challenge of unobserved confounding.} 
Our experiments assumed explicit access to user-specific objectives, allowing us to simulate and control for confounding effects. While this setup provided a clear experimental framework, it does not fully reflect real-world scenarios, where user objectives are not directly observable, posing the challenge of unobserved confounding. To address this, preference data collection methods could be enhanced to infer user-specific objectives through, e.g. a) \textbf{explicit feedback}: users could provide rationales for their preferences, offering richer insights into their underlying objectives;
\textbf{b) auxiliary data: } preferences could be inferred from contextual information, such as user demographics or historical interactions. Only a limited number of existing works \citep{li_personalized_2024, wu_aligning_2024, liu_llms_2024, kobalczyk_few-shot_2024} have considered incorporating user-specific information for personalised alignment, yet none have explicitly examined the confounding issues identified in this work.

\textbf{Mitigating low overlap.} Our findings underscore the importance of latent overlap. Strong correlations between latent response features can cause catastrophic overfitting, hindering generalisation under distribution shifts. However, controlling overlap in practice is not straightforward. Existing work on robust reward modelling mainly addresses biases from response-specific artefacts, such as length \citep{singhal_long_2024, chen_odin_2024} or formatting \citep{zhang_lists_2024}, but as demonstrated, causal misidentification extends beyond these cases. While approaches derived from causal representation learning can help, architectural modifications have inherent limitations. Instead, we advocate for a shift in preference data collection practices—from passive data gathering to targeted interventions wherein latent factors are systematically controlled to reduce model uncertainty about the set of true causal features. Appendix~\ref{appdx:roadmap} outlines potential research directions in this pursuit.

\textbf{Limitations.} We note that the assumptions discussed in this paper are \textit{sufficient}--but not strictly \textit{necessary}--for identifiability. In practice, identifiability may still be achievable under weaker conditions, particularly when data from multiple environments is available \citep{ahuja_interventional_2023, richens_robust_2024, von_kugelgen_identifiable_2023}. This is especially pertinent to recent work on reward learning across diverse datasets and personalised reward modelling--areas that have garnered increasing attention in alignment research \citep{wang_interpretable_2024, rame_rewarded_2023, bose_lore_2025}. A comprehensive treatment of robust reward learning in the multi-dataset setting is left for future work.

\section*{Impact Statement}

This paper presents work whose goal is to advance the field of Machine Learning. There are many potential societal consequences of our work, none which we feel must be specifically highlighted here.

\section*{Acknowledgments} This work was supported by Azure sponsorship credits granted by Microsoft’s AI for Good Research Lab. 
Katarzyna's Kobalczyk research is supported by funding from Eedi. 

\bibliography{references}
\bibliographystyle{icml2025}

\newpage
\appendix
\onecolumn

\section{Related Work}\label{appdx:related-work}

\textbf{Causal Inference for Text Data and LLM Alignment.} Causality has gained increasing attention in NLP, with several studies proposing methods for treatment effect estimation from text data, each with a different focus. Examples include the discovery of latent text attributes \citep{fong_discovery_2016}, the impact of unmeasured latent treatments \citep{fong_causal_2023}, non-parametric treatment effect estimation \citep{pryzant_causal_2021}, and the robustness of such estimators \citep{gui_causal_2023}. These works predominantly address data analysis tasks where causality is used as an interpretability tool. In large language models (LLMs), \citet{vig_investigating_2020} use causal mediation analysis to study gender biases, while \citet{cao_can_2022} analyse prompt-based probing from a causal perspective. Other works, such as \citet{wang_causal_2023}, introduce in-context causal interventions to alleviate entity biases in prompts, and \citet{hu_causal_2021} adopt causality for controllable text generation. However, none of these directly apply to the pairwise preference learning setup explored in our work or the subsequent LLM fine-tuning stage via RLHF. Few notable exceptions addressing our call for adapting a causal perspective for preference learning and alignment involve the work of \citet{xia_aligning_2024} who attempt to leverage a pre-trained reward model as an instrumental variable for causal intervention on LLMs. \citet{lin_optimizing_2024} draw on importance weighting and double robustness principles to present methods for more robust preference optimization in DPO. \citet{reber_rate_2025} introduce a causal framework for understanding and evaluating spurious correlations of reward models. \citet{butcher_aligning_2024} use counterfactual pairs to address spurious correlations during the alignment process. Finally, \citet{liu_rrm_2025} propose a causal framework for learning preferences independent of response artifacts (e.g., length), assuming that prompt-independent features are spurious, and introduce a data augmentation technique to eliminate them. 

\textbf{Reward Hacking and Causal Confusion}. Many prior works in reinforcement learning have studied reward hacking \citep{skalse_defining_2022}, closely linked to causal misidentification \citep{de_haan_causal_2019, tien_causal_2023}. This occurs when learned policies or rewards achieve high accuracy within their training distribution but fail to generalise to novel scenarios because the models do not correctly identify the underlying causal structure. In preference learning for AI alignment, this issue has primarily been explored in the context of response-specific biases, such as length formatting \citep{zhang_lists_2024} or response length itself \citep{singhal_long_2024, chen_odin_2024, park_disentangling_2024}, where authors demonstrate that learned reward models or downstream policies improve significantly by simply increasing response length, rather than considering other relevant features. Some evidence suggests that reward model ensembles can improve the robustness of LLM policies \citep{annervaz_learning_2018, coste_reward_2024, eisenstein_helping_2024}, but they do not fully eliminate the reward hacking problem, leaving it an open challenge.

\section{Identifiability Proofs}\label{appdx:identif-proofs}

For convenience, we restate the assumptions enabling identifiability of $\E[L(x' y, y')]$.

\consistency*

\unconfoundedness*

\overlap*

\identifiability*

\begin{proof}
     \begin{align*}
        \E&\left[L(x; y, y') \right] =  \\ 
        &=\E(L(x; y, y')\vert X = x, Y= y, Y'=y') & \text{(by positivity and unconfoundedness)}\\
        &=\E\left[L \vert X =x, Y=y, Y'=y'\right]  & \text{(by consistency)}
    \end{align*}
\end{proof}

As outlined in the main body of the paper, the assumption of overlap in the observational space can be replace with the assumptions of latent sufficiency and latent overlap:

\latentsufficiency*

\latentoverlap*

\latentidentifiability*

\begin{proof}
Let $x\in \gX$, $y,y' \in \gY$ and let $z^X = g^X(x)$, $z^T=g^T(x, y)$, ${z'}^T=g^T(x, y')$. Since $L = f(R - R', U)$, where $U$ is an independent exogenous noise variable\footnote{
The re-expression of $L$ as a function of $R-R'$ under the BTL model is possible thanks to the Gumbel-softmax trick \citep{luce_individual_1959, maddison_ast_2014, oberst_counterfactual_2019}.
} and due to the sufficiency condition, we have that:

1) Consistency in the observational space implies consistency in the latent space, i.e. for an individual with prompt-response assignment $(X, Y, Y')$ whose latent factors are $(Z^X, Z^T, {Z'}^T) \equiv (g^X(X), g^T(X, Y), g^T(X, Y'))$, we observe the associated potential outcome, i.e. $L=L(Z^X, Z^T, {Z'}^T)$.

2) Unconfoundedness in  the observational implies unconfoundedness in the latent space, i.e. $L(Z^X = z^X; Z^T= z^T, {Z'}^T={z'}^T)  \equiv L(z^X; z^T, {z'}^T)$ is independent of $(Z^X, Z^T, {Z'}^T)$.

3) $\E \left[L(x; y, y') \right] = \E \left[L(z^X; z^T, {z'}^T) \right]$

Thus, it follows that:

\begin{align*}
    \E \left[L(x; y, y') \right] &= \E \left[L(z^X, z^T, {z'}^T) \right] & \text{(by sufficiency)} \\
    &= \E \left[L(z^X, z^T, {z'}^T) \vert Z^X=z^X, Z^T=z^T, {Z'}^T = {z'}^T\right] & \text{(by latent overlap \& unconfoundedness)}  \\
    &= \E \left[L \vert Z^X=z^X, Z^T=z^T, {Z'}^T = {z'}^T\right] & \text{(by latent consistency)} 
\end{align*}
 
\end{proof}

\section{Extended Discussion}\label{appdx:extended-discussion}

\subsection{Interpretability of Latent Factors}\label{appdx:amce}

Aside from estimating the potential outcomes $L(x, y, y')$ or rewards $R(x, y)$ for any  $x \in \gX$ $y, y'\in \gY$, we may wish to consider the average effects of their underlying latent factors. If the individual components $Z_k$ of the set of latent factors $Z$ enjoy a human-interpretable meaning, we would like to predict how an intervention of one of these components (e.g. increasing creativity of an answer, or its length) will affect user preferences while holding other features constant. If we assume that each of $Z^k$'s can be represented as a binary value indicating the presence or absence of a particular feature, then following the convention adopted by \citet{fong_discovery_2016, fong_causal_2023} such inferences are made possible by considering a version of the Average Marginal Component Effect (AMCE) adopted to our pairwise preference learning setup: 
\begin{align}\label{eq:amce}
    \mathrm{AMCE}_k &:= \int_{z \in \gZ^{-k}}
    \E\bigl[L(Z^k = 1, {Z'}^k = 0, Z^{-k} = z, {Z'}^{-k} = z\bigr]m(z)dz,
\end{align}
where $\gZ^{-k}$ denotes the latent space $\gZ$ except the $k$-th one and $m(\cdot)$ is some analyst-defined density on this space, which can be taken, e.g. as a uniform distribution or the distribution induced by the observable $(X, Y, Y')$. The sufficiency and latent overlap conditions imply that this measure is identifiable from observational data. 

\subsection{A Roadmap for Robust and Interpretable Alignment}\label{appdx:roadmap}

Despite the appealing interpretability properties of the AMCE, similarly to direct potential outcome estimation, it assumes that the mapping $g: \Sigma^* \rightarrow \gZ$ between the observable texts and their latent features is given and that it captures \textit{all} factors that causally influence the reward, while disregarding any spuriously correlated features. In practice, the map $g$ needs to be learned from data and ideally, with minimal supervision--i.e., not requiring humans to label large numbers of prompt-response pairs according to a prohibitively long list of factors that can plausibly be causally linked to the outcomes. To enable learning of more robust and interpretable feature-extracting and reward predicting functions we outline some key directions for future research and data collection practices. 

\textbf{Collection of rationales, not only preference choices.} Current preference learning frameworks typically collect binary comparisons between responses, but these do not reveal why a particular choice was made. This lack of transparency obscures the true causal mechanisms driving user preferences and makes it difficult to disentangle spurious correlations from genuine reward-relevant features. Instead of solely collecting pairwise preference choices, future data collections practices could incorporate rationale elicitation, with users providing short explanations for their decisions. Such rationales could be used as weak supervision signals, guiding the learning of latent causal representations without requiring an exhaustive, predefined list of relevant features. This additional form of supervision could also for interpretable post-hoc causal analyses via causal estimands like the AMCE. 

\textbf{Active querying strategies and interventions.} Much theoretical work has been done demonstrating how identification of causal representations requires auxiliary labels \citep{locatello_challenging_2019, makar_causally_2022, brehmer_weakly_2022, ahuja_interventional_2023}. To gather such labels, rather than querying users at random points during the conversation and generating responses with unknown latent features, active querying strategies \citep{melo_deep_2024, muldrew_active_2024} can improve the efficiency and robustness of preference learning by selecting data points that maximize information gain. This applies to both feature identification (learning to predict $\hat{z}$'s) and reward prediction (learning to predict rewards from $\hat{z}$'s). Queries for rationales should prioritize instances where the model is least confident the feature-extracting part, reducing ambiguity of the learned representations. Reducing the ambiguity of the reward prediction part should benefit from interventional data collection so that given a prompt, two responses are generated that differ in isolation by a specific latent factors, allowing for direct causal attributions. Such interventions should also take into the account user-specific contextual variables to ensure that the latent treatments are well-balanced across different demographic subgroups. 

\textbf{Interpretable control over LLM-generated content.} To enable targeted and interpretable interventions as described above, it is necessary that LLM-generated content can be explicitly controlled \citep{hu_causal_2021, liang_controllable_2024, dekoninck_controlled_2024} to specify desired properties of text . Without such form of control, intervention-based preference learning becomes infeasible, as models would lack the ability to systematically vary latent factors. Language models should be designed to generate responses that explicitly vary along key latent dimensions, such as response verbosity or style, rather than letting these factors emerge implicitly. 

Preference learning and optimisation remain an exciting field whose success necessarily relies on integrating multiple approaches, with causality playing a central role. By moving beyond passive preference collection to rationale-aware learning, active data querying, and attribute-conditional control, we can train models that are more robust, interpretable, and generalisable to unseen settings. This shift is crucial for aligning AI systems with human values while avoiding the negative effects of causal misidentification or confounding.

\newpage

\subsection{The connection to conventional treatment effect estimation}

Our formalism follows the potential outcomes framework \citep{rosenbaum_central_1983, splawa-neyman_application_1990}, but applied to preference learning rather than traditional treatment effect estimation. 

In classical causal inference with binary treatments (e.g., drug trials), we define a treatment assignment $T \in \{0, 1\}$ and the observed outcomes $Y$, as well as the potential outcomes under treatment ($Y(1)$) and potential outcomes under no treatment ($Y(0)$). We then aim to estimate treatment effects defined as: $\mathbb{E}[Y(1) - Y(0)]$. The fundamental challenge of causality lies in the fact that for a given individual we only observe one outcome, i.e. if their treatment assignment is $T=t$, then we observe $Y = Y(t)$, but not the counterfactual outcome.

Our framework extends this to preference learning by treating each prompt-response pair $(x, y, y')$ as a distinct "treatment".  Here, the tuple $(X, Y, Y')$ is the random variable representing the treatment assignment, and $L(x;y, y')$ represents the potential preference outcome when $(X, Y, Y')= (x, y, y')$, i.e. when the user observes $(x, y, y')$. However, just as in traditional causal inference, we only observe the outcomes for the assigned texts and not all possible texts--a given user is only shown one (or at most a finite subset of) possible prompts and responses. The observed label, defined as $L$, satisfies the consistency condition so that if $(X, Y, Y') = (x, y, y')$, then  $L = L(x; y, y')$.  The key difference from conventional treatment effect estimation is that while drug studies typically focus on binary treatment effects, we operate in an extremely high-dimensional space of natural language where each $(x, y, y')$ combination is effectively a unique treatment. Rather than computing pairwise contrasts between all possible treatments, we focus on the expected potential preference $\mathbb{E}[L(x,y,y')]$ for any given tuple $(x, y, y')$. This is analogous to estimating $\mathbb{E}[Y(t)]$ for each treatment $t$  in a multi-arm trial, which in turn enables making relative comparisons $\mathbb{E}[Y(t_1) - Y(t_2)]$ between different treatment choices $t_1$, $t_2$.  In our framework, for instance, we could compare $\mathbb{E}[L(x,y_0,y_1)]$ against $\mathbb{E}[L(x,y_0,y_2)]$ to study how the expected preference change if the second observed response $Y'$ is set to $y_2$ instead of $y_1$, while keeping the prompt $X= x$ and the first response $Y = y_0$ fixed. Given the vast space of possible relative comparisons we direct our attention to just the expected potential outcomes, $\mathbb{E}[L(x,y,y')]$, rather than their pairwise differences.

\newpage

\section{The UltraFeedback Case Study}

\subsection{Case Analysis}\label{appdx:uf-analysis}

Suppose that the latent factor $Z$ determining the rewards are two dimensional taking values in $\mathbb{R}^2$ (e.g. $Z$ may correspond to the extent of truthfulness and instruction-following of a candidate response). Suppose that the distribution of $X$'s $Y$'s and $Y'$'s is such that $Y \indep Y' \vert X$ and $Y, Y'$ are identically distributed given $X$. Further, assume that $P(X, Y, Y')$ induces a distribution on $P(Z, Z')$ such that $Z, Z' \overset{iid}{\sim} \mathcal{N}\left(\mu, \Sigma \right)$. As a result, we have:
\begin{equation}
    \boldsymbol{\delta} := [Z - Z'] \sim \mathcal{N}\left(
    \left[\begin{matrix}
        0 \\ 0
    \end{matrix}\right], 
    \left[\begin{matrix}
        \sigma_1 & \rho \\
        \rho  & \sigma_2
    \end{matrix}\right]
    \right),
\end{equation}
where WLOG $\sigma_1=\sigma_2=1$.

Now, suppose that the reward function is linear in $Z_1$ and $Z_2$, so that for all $x\in \gX, y \in \gY$, we have
\begin{equation}
    r(x, y) = r(z) = \alpha z_1 + (1-\alpha)z_2 \quad \text{ for } \ \alpha \in [0, 1].
\end{equation}
Then, the preference label is completely determined by the vector $\boldsymbol{\delta}$. With $\delta_1 := Z_1 - Z_1'$ and $\delta_2 = Z_2 - Z_2'$, we have that for any samples for which $\alpha\delta_1 + (1-\alpha)\delta_2 > 0$ the first option is preferred, and vice versa. This can be visualised in the $(\delta_1, \delta_2)$-plane, as in Figure~\ref{fig:uf-visualise}. 

\begin{figure}[h]
    \centering
    \includegraphics[width=0.75\linewidth]{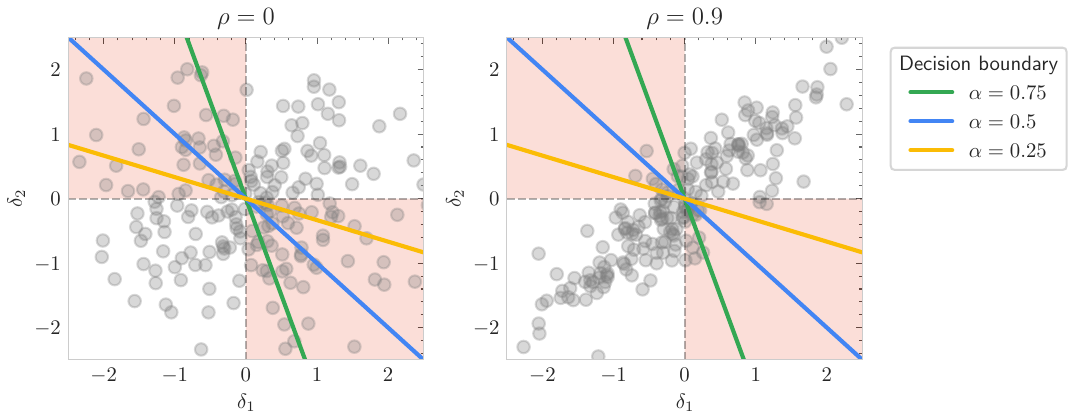}
    \vspace{-1em}
    \caption{\textit{The impact of $\rho$ on determining $\hat{\alpha}$ and the classification accuracy.}}
    \label{fig:uf-visualise}
\end{figure}

The decision boundary determining the preference label $L$ is dependent on the ground-truth value of $\alpha$. It is defined by a straight line with a slope of $-\frac{\alpha}{1-\alpha}$ and the intercept at the origin. In a noise-free setting, examples $(x, y, y', \ell)$ for which $\boldsymbol{\delta}$ falls below the decision boundary are labelled with $\ell=0$ (i.e, the first option $(x, y)$ wins) and all samples above the decision boundary have the label $\ell=1$ (i.e., the second option $(x, y')$ wins). An optimal fitted reward function $\hat{r}(z) = \hat{\alpha}z_1 + (1-\hat{\alpha})z_2$ is s.t. $\hat{\alpha} = \alpha$. Samples falling into the first and third quadrants of the $(\delta_1, \delta_2)$-plane are classified correctly \textit{no matter} if the fitted decision boundary (determined by $\hat{\alpha}$) aligns with the ground-truth boundary (determined by $\alpha$). Samples falling into the second and fourth quadrants (red-shaded region) are prone to \textit{misclassification}, if the fitted and ground-truth decision boundaries diverge. If the training data exhibits a high correlation (see the plot with $\rho=0.9$) few samples fall into the red-shaded region (the probability of this event is precisely $\frac{1}{2} -\frac{\mathrm{arcin}(\rho)}{\pi}$), making $\hat{\alpha}$ sensitive to outliers and exhibiting a higher variance than for non-correlated samples.

\subsection{Experimental Details}\label{appdx:uf-details}

Code for reproducing the experiments is made available at: \href{https://github.com/kasia-kobalczyk/causal-preference-learning}{https://github.com/kasia-kobalczyk/causal-preference-learning}.

To emulate the setting described above, we rely on the UltraFeedback dataset~\citep{cui_ultrafeedback_2024} containing prompt-response pairs scored according to the objectives of honesty, helpfulness, truthfulness and instruction-following, each represented as scalar values between 0 and 5. For simplicity, we focus on just the last two features: truthfulness and instruction-following, as the ground-truth causal factors determining the rewards.

\textbf{Datasets.} We create a large dataset of candidate pairs $(x, y, y', z, z', \ell)$, where $z$ and $z'$ are 2-dimensional vectors corresponding to the truthfulness and instruction-following scores of the of the candidate options $(x,y)$ and $(x, y')$, respectively.  The label $\ell$ is defined based on the value of $r(x, y) - r(x, y')$, with
\begin{equation}
    r(x, y) = r(z) = \alpha z_1 + (1-\alpha)z_2,
\end{equation}
where we set $\alpha = 0.25$. For examples with $r(x, y) - r(x, y') > 0$ we set $\ell = 0$, for examples with $r(x, y) - r(x, y') < 0$ we set $\ell = 1$ and for those with  $r(x, y) - r(x, y') = 0$ we sample $\ell$ at random. Based on this large data set containing all prompts and responses in the UltraFeedback dataset we create 4 training subsets, controlling the value of the correlation $\rho^{tr}$ between $z_1 - z_1'$ and $z_2 - z_2'$, where $z_1, z_1'$ represents the truthfulness score and $z_2, z_2'$ the instruction-following score. \textit{Training:} With stratified sampling based on the value of $z - z'$, we create 4 training datasets with 15.000 samples, one for each value of $\rho^{tr} \in \{0.0, 0.3, 0.6, 0.9\}$. The resulting training sets contain tuples $(x, y, y', \ell)$--the ground-truth values of the latent factors are not available during training and need to be approximated in an unsupervised fashion. \textit{Validation:} Validation splits used to determine the optimal stopping point during training of the reward models are samples according to the same distribution as the training datasets and contain 2.000 samples. \textit{Testing:} For each value of $\rho^{tr}$ we also create a testing dataset with matching value of the correlation coefficient (ID), and a testing dataset with only samples falling into the second and fourth quadrants in the $(\delta_1, \delta_2)$-plane, resulting in a correlation coefficient of -0.8 (OOD). The testing datasets are always disjoint from the training examples and contain 15.000 samples. 

\renewcommand*{\thefootnote}{\fnsymbol{footnote}}
\setcounter{footnote}{0}

\textbf{Reward model training.} We fit a standard BTL model by passing each prompt-response pair $(x, y)$ through an LLM to obtain its embedding $e$. We do not perform LLM fine-tuning and simply fit the reward models on the pre-computed embeddings. In this experiment we use embeddings of the Llama-3-8B\footnote{\label{llama-model} \href{https://huggingface.co/meta-llama/Meta-Llama-3-8B}{huggingface.co/meta-llama/Meta-Llama-3-8B}} model. The LLM embeddings are processed with a 3-layer MLP with a hidden dimension 512 and an output size of 64. The last linear layer maps from the 64-dimensional latent embedding to a scalar value representing the reward. All models are trained by minimising the empirical estimate of the negative-loglikelihood loss according to the BTL model:
\begin{equation}\label{eq:reward-loss}
    \gL_R = -\sum_{(x, y^w, y^\ell)}\log\sigma\left(r_\theta(x, y^w) - r_\theta(x, y^\ell)\right),
\end{equation}
where $y^w, y^\ell$ stand for the winning and loosing responses, respectively. All models are trained using the Adam optimiser with a learning rate of 1e-4 for 10 epochs. Model weights $\theta$ with the highest validation accuracy are saved for evaluation. For each value of $\rho^{tr}$ we train models with 3 random seeds.

\section{The HH-RLHF Case Study}

\subsection{Experimental Details}\label{appdx:hhrlhf-details}

Code for reproducing the experiments is made available at: \href{https://github.com/kasia-kobalczyk/causal-preference-learning}{https://github.com/kasia-kobalczyk/causal-preference-learning}.

\textbf{Dataset.} We rely on the extended version of the HH-RLHF dataset \citep{bai_training_2022} as provided by \citet{siththaranjan_distributional_2024}. The entire dataset can be represented as tuples $(t, x, y, y', c, \ell)$, where $c \in \{0, 1\}$ denotes the objective with which the choice $\ell \in \{0, 1\}$ is made and $t \in \{0, 1\}$ denotes the type of $x$, i.e. whether $(x, y, y')$ was originally part of the helpful $(t = 0)$ or harmless split ($t=1$). In the original data \citep{bai_training_2022} we only observe examples with $t=c$. \citet{siththaranjan_distributional_2024} augment this dataset with counterfactual labels for such that $t \neq c$ which we refer to as inconsistent samples. We create six independent training datasets, controlling the ratio of consistent to inconsistent samples, i.e. the parameter $\rho = P(\mathrm{type}(X) = C) \in \{0.5, 0.6, 0.7, 0.8, 0.9, 1.0\}$. The resulting training datasets consist of 30.000 samples $(x, y, y', c, \ell)$, with the label $t$ not being part of the training sets. We also create validation splits with the same values of $\rho$'s of 6.000 sample. The remaining 46518 samples is left for testing. 

\textbf{Models.} We train three different versions of multi-objective BTL models. Each model has the same pre-processing backbone. Prompt-response pair $(x, y)$ are passed through an LLM to obtain its embedding $e$. We do not perform LLM fine-tuning and simply fit the reward models on the pre-computed embeddings. We use embeddings of the Llama-3-8B\textsuperscript{\ref{llama-model}} model. 

\begin{itemize}[left=0.2cm]
    \item \textbf{Base:} The embeddings $e$ are concatenated with the objective label $c$ and passed through a 3-layer MLP $r_\theta: \sR
    ^{d} \rightarrow \sR$ with a hidden dimension of 512 and outputting the predicted scalar reward $\hat{r}$. The model is trained by finding parameters $\theta$ that maximise the log-likelihood under the BTL model as in (\ref{eq:reward-loss}).
    \item \textbf{Multihead:} The embeddings $e$ are passed through a 3-layer MLP $g_\theta: \sR^{d} \rightarrow \sR^{512}$ with a hidden dimension of 512 mapping them to a latent representation $\hat{z} \in \sR^{512}$. Depending on the value of $c$, the vectors $\hat{z}$ are then passed to one of the two prediction heads $f_{w_0}: \sR^{512} \rightarrow \sR$ or $f_1: \sR^{512} \rightarrow \sR$. Here, we let $f_0$ and $f_1$ be 1 layer MLPs with a hidden dimension of 512. Thus the reward function can be defined as:
    \begin{equation}
    r_{\theta, w_0, w_1}(x, y, c) = \begin{cases}
        f_{w_0}(g_\theta(x, y)) & \text{if } c = 0 \\
        f_{w_1}(g_\theta(x, y)) & \text{if } c = 1
    \end{cases}    
    \end{equation}
    The model is trained by finding parameters $(\theta, w_0, w_1)$ that maximise the log-likelihood under the BTL model analogously to (\ref{eq:reward-loss}), replacing $r_\theta$ with $r_{\theta, w_0 w_1}$.
    \item \textbf{Adversarial:} The adversarial model adds an additional network $h_\phi: \sR^{512} \rightarrow \sR$ on top of the multihead architecture, mapping from $\hat{z}$ to unnormalised log probabilities of $C = 1$. Here, we let $h_\phi$ be a 1-layer MLP with a hidden dimension of 512. The model is trained under the adversarial objective:
    \begin{align}\label{eq:adversarial-objective}
        &\min_{\theta, w_0, w_1}\max_{\phi} \gL_{\mathrm{R}}(\theta, w_0, w_1) - \lambda\gL_{\mathrm{adv}}(\theta, \phi),
    \end{align}
    where $ \gL_{\mathrm{R}}(\theta, w_0, w_1)$ is the negative log-likelihood under the BTL model of the reward function $r_{\theta, w_0, w_1}$ as in the Multihead model and $\gL_{\mathrm{adv}}$ is the binary-cross entropy loss computed across all prompt-response pairs--i.e., both for $(x, y)$ and $(x, y')$ within each sample $(x, y, y', c, \ell)$:
    \begin{equation}
        \gL_{\mathrm{adv}}(\phi, \theta) = -\sum_{(x, y, y', c) \in \gD} h_\phi(\hat{z})c + (1-h_\phi(\hat{z}))(1-c) + h_\phi(\hat{z}')c + (1-h_\phi(\hat{z}'))(1-c),
    \end{equation}
    where $\hat{z} = g_\theta(x, y)$ and $\hat{z}' = g_\theta(x, y')$. The trade off between the two losses is controlled by the hyperparameter $\lambda$, which in our experiments we set to 1.0. The min-max optimisation problem is implemented with the gradient reversal technique \citep{ganin_domain-adversarial_2016}.
\end{itemize} 
All MLPs are implemented with GELU activation functions \citep{hendrycks_gaussian_2023}. 

\textbf{Training.} All models are trained using the Adam optimiser with a learning rate of 1e-4 for 10 epochs. Model weights with the highest validation accuracy are saved for evaluation. For each value of $\rho$ we train models with 5 random seeds.

\subsection{Results}\label{appdx:hhrlhf-results}

Table~\ref{tab:hh-rlhf-test} shows the test-time accuracies of all models trained on datasets with varying values of $\rho$. Table~\ref{tab:hh-rlhf-train} shows the accuracies on the training sets at the training step corresponding to the best model performance on the validation set. As discussed in the main body of the paper, the Base model exhibits strong overfitting. The Multihead architecture mitigates this to an extent, with the additional Adversarial objective bringing further improvements on the inconsistent samples.

\begin{table}[h]
    \centering
    \begin{tabular}{rcccccc}
    \toprule
    $\mathrm{type}(X) = C$ & \multicolumn{3}{c}{False} & \multicolumn{3}{c}{True} \\
    \cmidrule(lr){2-4}\cmidrule(lr){5-7}
    model & Base & Multihead & Adversarial & Base & Multihead & Adversarial \\
    $\rho$ &  &  &  &  &  &  \\
    \midrule
    0.5 & 60.0 ± 0.2 & 62.9 ± 0.2 & \textbf{63.7} ± 0.2 & 60.9 ± 0.1 & 66.0 ± 0.1 & \textbf{66.3} ± 0.1 \\
    0.6 & 59.6 ± 0.1 & 62.4 ± 0.1 & \textbf{63.0} ± 0.1 & 61.3 ± 0.1 & 67.0 ± 0.1 & \textbf{67.3} ± 0.1 \\
    0.7 & 58.4 ± 0.1 & 60.6 ± 0.1 & \textbf{62.3} ± 0.1 & 62.5 ± 0.2 & 67.2 ± 0.1 & \textbf{67.6} ± 0.1 \\
    0.8 & 56.7 ± 0.1 & 58.6 ± 0.1 & \textbf{61.3} ± 0.1 & 63.7 ± 0.2 & 67.6 ± 0.2 & \textbf{67.9} ± 0.2 \\
    0.9 & 55.9 ± 0.1 & 56.3 ± 0.1 & \textbf{58.9} ± 0.1 & 64.2 ± 0.2 & 67.9 ± 0.2 & \textbf{68.4} ± 0.2 \\
    1.0 & \textbf{54.9} ± 0.1 & 53.8 ± 0.1 & 53.5 ± 0.1 & 64.1 ± 0.2 & 67.9 ± 0.2 & \textbf{68.4} ± 0.2 \\
    \bottomrule
    \end{tabular}
    \caption{Test accuracy [\%] across all model architectures trained on datasets with varying values of $\rho$. }
    \label{tab:hh-rlhf-test}
\end{table}

\begin{table}[h]
    \centering
    \begin{tabular}{rcccccc}
    \toprule
    $\mathrm{type}(X) = C$ & \multicolumn{3}{c}{False} & \multicolumn{3}{c}{True} \\
    \cmidrule(lr){2-4}\cmidrule(lr){5-7}
    model & Base & Multihead & Adversarial & Base & Multihead & Adversarial \\
    $\rho$ &  &  &  &  &  &  \\
    \midrule
0.5 & 73.5 ± 0.2 & 72.2 ± 0.2 & 72.2 ± 0.2 & 74.1 ± 0.2 & 74.8 ± 0.2 & 74.2 ± 0.2 \\
0.6 & 71.3 ± 0.2 & 73.0 ± 0.2 & 73.0 ± 0.2 & 73.1 ± 0.1 & 75.8 ± 0.1 & 75.8 ± 0.1 \\
0.7 & 73.9 ± 0.2 & 70.0 ± 0.2 & 68.0 ± 0.2 & 78.0 ± 0.1 & 74.9 ± 0.1 & 72.7 ± 0.1 \\
0.8 & 69.4 ± 0.3 & 68.0 ± 0.3 & 69.9 ± 0.3 & 75.1 ± 0.1 & 75.3 ± 0.1 & 74.7 ± 0.1 \\
0.9 & 73.7 ± 0.4 & 64.5 ± 0.4 & 67.2 ± 0.4 & 80.1 ± 0.1 & 75.0 ± 0.1 & 74.6 ± 0.1 \\
1.0 & -- & -- & -- & 79.6 ± 0.1 & 74.1 ± 0.1 & 72.4 ± 0.1 \\
    \bottomrule
    \end{tabular}
    \caption{Training accuracy [\%] across all model architectures trained on datasets with varying values of $\rho$.}
    \label{tab:hh-rlhf-train}
\end{table}


\end{document}